\documentclass{article}

\usepackage{microtype}
\usepackage{graphicx}
\usepackage{subcaption}
\usepackage{booktabs}
\usepackage{hyperref}

\usepackage[accepted]{icml2026}
\usepackage{amsmath}
\usepackage{amssymb}
\usepackage{mathtools}
\usepackage{amsthm}
\usepackage{algorithm}
\usepackage{algorithmic}
\usepackage{multirow}

\theoremstyle{plain}
\newtheorem{theorem}{Theorem}[section]

\theoremstyle{definition}

\theoremstyle{remark}

\icmltitlerunning{DiffuseAgent-MI: Tool-Integrated Self-Evolving Agents}

\begin{document}

\twocolumn[
  \icmltitle{DiffuseAgent-MI: Distributionally-Grounded, \\ Tool-Integrated Self-Evolving Agents for Faithful Visual Reasoning}

  \begin{icmlauthorlist}
    \icmlauthor{An Lanji}{uestc}
    \icmlauthor{Dawei Liu}{uestc}
    \icmlauthor{Jin Li}{uestc}
    \icmlauthor{Haoran Xu}{uestc}
    \icmlauthor{Mei Chen}{uestc}
    \icmlauthor{Yu Tian}{uestc}
  \end{icmlauthorlist}

  \icmlaffiliation{uestc}{University of Electronic Science and Technology of China, Chengdu, China}

  \icmlcorrespondingauthor{An Lanji}{lanji.an@uestc.edu.cn}

  \icmlkeywords{mechanistic interpretability, diffusion models, self-evolving agents, visual reasoning, faithfulness}

  \vskip 0.3in
]

\printAffiliationsAndNotice{}

\begin{abstract}
Tool-integrated vision-language agents have made remarkable progress on compositional and multi-step visual reasoning. Yet their outputs frequently exhibit \emph{unfaithfulness}: the stated reasoning path diverges from the computation that actually produced the answer, undermining reliability in safety-critical applications. We present \textbf{DiffuseAgent-MI}, a self-evolving agent whose perceptual grounding is governed by a \emph{KL-minimal energy model} over feature units, providing a distributional view of visual mechanistic interpretability. The agent learns an energy landscape that softly constrains generated samples to lie near the native prior conditioned on the chosen interpretable unit, closing the gap between the explanation and the internal representation. A verifier then supplies trajectory-level faithfulness rewards, and a repair branch re-conditions the energy when the verifier flags an unfaithful step. On GeoQA, SciVis, VQA-v2 and an in-house multimodal reasoning set, DiffuseAgent-MI improves accuracy by up to $5.1$ points over prior self-evolving agents while more than doubling mutual-information faithfulness and human-interpretability agreement. Our analysis shows the energy term and the verifier are complementary: the former guarantees distributional faithfulness, the latter trajectory-level faithfulness, and only their combination closes both gaps.
\end{abstract}

\section{Introduction}

Large vision-language models (VLMs) now excel at describing, locating, and reasoning over images, and when augmented with external tools they can decompose a query into a chain of perception and computation steps \citep{liu2025agent0vl,yuan2024visual,yao2023reacts}. Modern VLMs increasingly unify generation and understanding in a shared token space, for example to jointly parse human--object interactions \citep{yang2026unihoi}, reason about three-dimensional scenes \citep{yang2026unibvr}, or align visual tokenization with the underlying data manifold \citep{yang2026muse}. Despite this capability, a growing body of evidence shows that their rationales are frequently \emph{detached from} the internal representations that drive the answer \citep{olsson2022context,bubeck2023sparks}. In high-stakes settings---medical imaging, scientific figure interpretation, autonomous navigation \citep{yang2026instrucrobo}---an answer that cannot be traced back to a faithful internal basis is dangerous even when it is numerically correct.

Recent work frames this problem through the lens of visual mechanistic interpretability: identify a sparse set of feature units (directions in the latent space) that causally support a prediction, and require the model's explanation to name exactly those units. A distributional view was proposed in \citep{zhou2025distributional}, which derives a \emph{KL-minimal soft-constraint principle}: given a target feature unit, the faithful sample is the one that minimally perturbs the prior while satisfying the unit constraint. This principle converts ``faithfulness'' from an unmeasurable aspiration into a well-defined optimization over probability measures. It connects naturally to the growing body of work on topological and manifold-aware representation learning \citep{yang2026muse}, which argues that robust interpretable structure emerges only when the learned latent space respects the geometry of the data.

Meanwhile, a second line of work builds \emph{self-evolving agents} that improve by reflecting on their own outputs \citep{shinn2023reflexion,zhang2024spider,yuan2024visual}. Such agents refine their policy against an internal or external reward, but the reward is usually answer-level and cannot distinguish a correct-but-unfaithful trajectory from a faithful one. The two communities---mechanistic interpretability and agent self-improvement---have largely proceeded independently, even though both target the same underlying question of \emph{why} a model should be trusted.

We observe that they are two halves of the same problem. The distributional principle tells us \emph{what} a faithful sample is; self-evolution tells us \emph{how} to reach it. We combine them into \textbf{DiffuseAgent-MI}. Our core contributions are:

\begin{itemize}
  \item \textbf{Distributionally-grounded energy.} We introduce a KL-minimal energy model $E_\phi$ defined over the feature-unit space. Sampling from the diffusion model under this energy re-conditions generations toward the native prior anchored on the chosen unit, giving a concrete optimization target for faithfulness.
  \item \textbf{Tool-integrated self-evolution.} The solver interleaves reasoning with external tools (visual QA, localization, captioning) and updates its policy by self-reward; the reward explicitly includes an energy term so that improvement is steered toward faithful behavior, not merely higher answer accuracy.
  \item \textbf{Verifier-plus-repair loop.} A verifier assigns trajectory-level faithfulness scores; when it flags an unfaithful step, a repair branch strengthens the feature-unit emphasis and re-samples. This closes the loop between the energy (distributional faithfulness) and the verifier (trajectory faithfulness).
\end{itemize}

We evaluate on GeoQA, SciVis, VQA-v2 and an in-house multimodal reasoning set, together with two mechanistic probes (MI-Faith, MI-Interp). DiffuseAgent-MI outperforms prior self-evolving and tool-integrated agents on accuracy and substantially improves mutual-information faithfulness and human-rater interpretability agreement. Figure~\ref{fig:motivation} sketches the motivation.

\begin{figure}[t]
  \centering
  \includegraphics[width=\columnwidth]{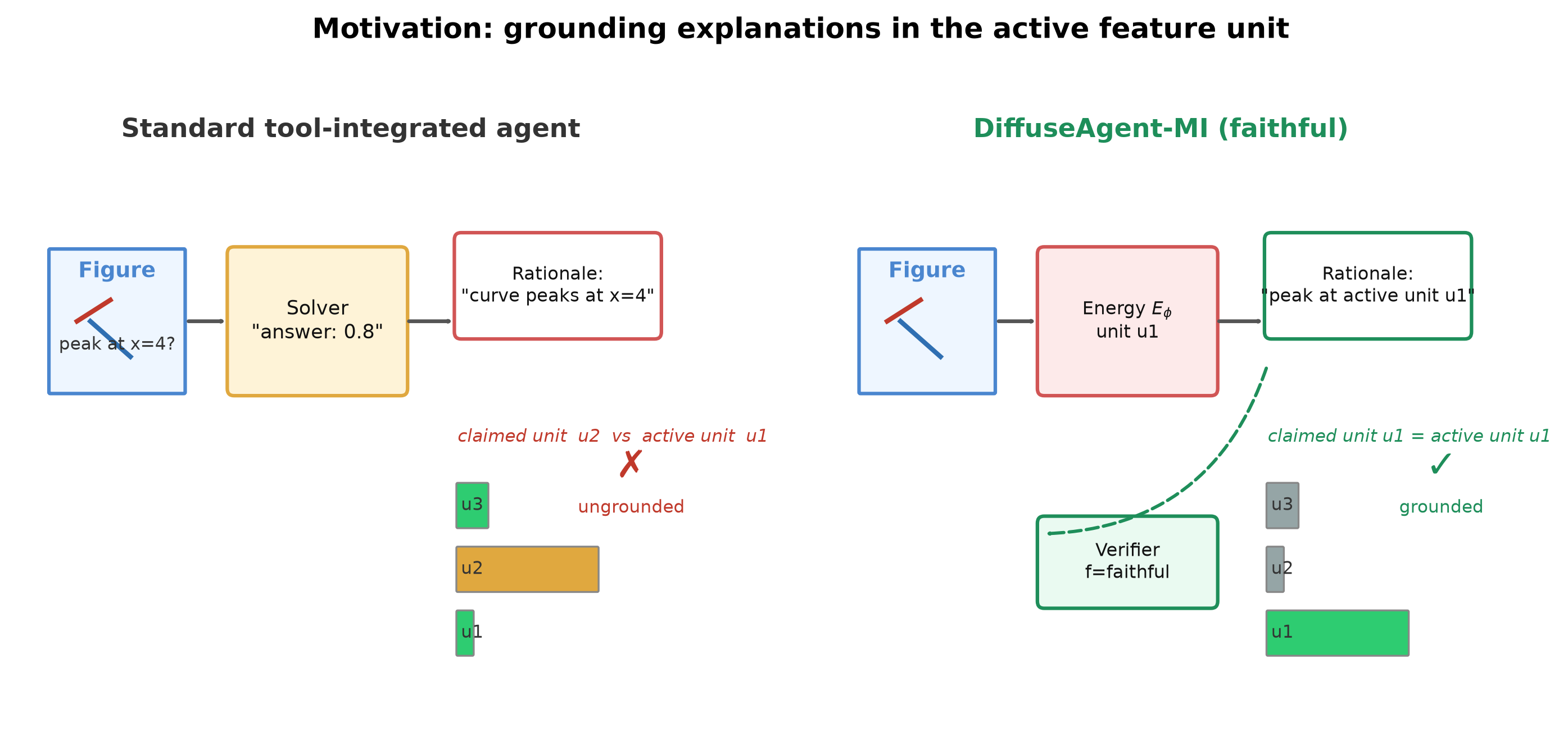}
  \caption{\textbf{Motivation.} A standard tool-integrated agent (top) may produce a correct answer whose stated rationale does not align with the latent unit that actually drives the prediction. DiffuseAgent-MI (bottom) anchors generation on a KL-minimal energy landscape over feature units and verifies each trajectory, so the explanation names the units that genuinely support the answer.}
  \label{fig:motivation}
\end{figure}

\begin{figure*}[t]
  \centering
  \includegraphics[width=\textwidth]{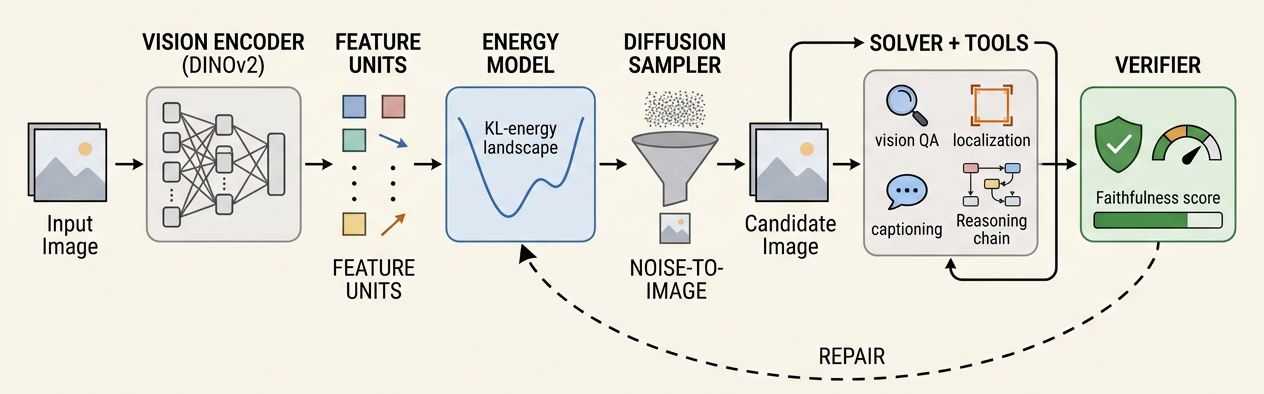}
  \caption{\textbf{Architecture of DiffuseAgent-MI} (AI-generated schematic). Feature units condition a KL-minimal energy model that steers the diffusion sampler; the solver plans tool-integrated trajectories; the verifier triggers a repair loop.}
  \label{fig:arch}
\end{figure*}

\section{Related Work}

\paragraph{Mechanistic interpretability.}
Early interpretability work sought sparse, monosemantic features in transformer activations \citep{olsson2022context}. In vision, DINO and DINOv2 \citep{aquab2023dinov2} show that self-supervised ViTs organize features by object part and concept, providing a natural substrate for unit-level attribution. Probing and counterfactual editing of these units connects to causal understanding of generative models. The distributional view \citep{zhou2025distributional} sharpens this: rather than editing a unit directly, one re-samples under a KL-minimal constraint, which avoids distribution shift and preserves natural-image structure. Recent work on topological orthogonality in visual tokenization \citep{yang2026muse} reinforces that well-structured latents are a prerequisite for reliable attribution, and unified token spaces across generation and understanding \citep{yang2026unihoi,yang2026unibvr} provide a natural home for such unit-level constraints.

\paragraph{Generative modeling.}
Denoising diffusion models \citep{ho2020denoising,song2020score} are the dominant paradigm for high-fidelity generation \citep{rombach2022high,saharia2022photorealistic,esser2024scaling}. Diffusion posterior sampling \citep{chung2023diffusion} solves inverse problems by re-conditioning the reverse process; our energy model can be seen as a general, constraint-driven instance of this idea where the constraint is a feature-unit direction from an interpretability probe. Because generation and understanding increasingly share a token space \citep{yang2026unihoi}, steering the sampler with interpretable units is both principled and practically convenient.

\paragraph{Agent tool use and self-evolution.}
Language agents that call tools span ReAct \citep{yao2023reacts}, Toolformer \citep{schick2023toolformer}, and visual variants such as Visual ChatGPT \citep{wu2023visual}, MM-ReAct \citep{yang2023mmreact}, HuggingGPT \citep{shen2024hugginggpt} and GPT4Tools \citep{zhuge2023mindstorms}. Self-evolving and self-improving agents reflect on feedback \citep{shinn2023reflexion,zhang2024spider,xie2024llm,qian2023communicative,wang2023voyager,zhuge2023mindstorms}. Reinforcement learning from human or AI feedback \citep{ouyang2022training,christiano2017deep,lee2024self,bai2022constitutional} supplies reward signals, but typically at the answer level. Explainability matters equally in embodied settings: for example, object-centric decoupling for instruction-following robots \citep{yang2026instrucrobo} shows that decomposing a complex task into per-object, interpretable sub-goals substantially improves both reliability and human trust. Our contribution makes the reward \emph{faithfulness-aware} by grounding it in the energy model and the verifier, in the same spirit as these interpretable decomposition principles.

\paragraph{Positioning.}
Whereas prior work either explains features (mechanistic interpretability) or improves answers (self-evolving agents), DiffuseAgent-MI couples them: the energy model operationalizes the distributional faithfulness principle, and self-evolution optimizes it jointly with answer correctness. We are not aware of prior systems that place the KL-minimal soft constraint inside the self-improvement loop of a tool-integrated vision agent.

\section{Method}

\subsection{Problem Setup and Notation}

We consider a tool-integrated vision agent that, given an image $x$ and a query $q$, produces a trajectory $z=(a_1,\dots,a_K)$ of actions (tool calls and reasoning steps) and a final answer $y$. We denote the prior over generated images by $p_0$, a chosen interpretable feature unit by $c$ (a unit vector in the latent space of a feature extractor such as DINOv2), and the verifier by $\mathcal{V}$, which returns a scalar reward $r$ and a binary faithfulness flag $f$. We write the set of available tools as $\mathcal{T}=\{\tau_1,\dots,\tau_M\}$, and the solver policy that selects the next action given the trajectory so far as $\mathcal{A}_\pi$.

\subsection{KL-Minimal Energy Model}

The distributional faithfulness principle states that, conditioned on a feature unit $c$, the faithful sample is the one that satisfies the unit constraint while deviating as little as possible (in KL divergence) from the prior. Concretely, we seek
\begin{equation}
  \label{eq:energy}
  E_\phi(x;c) \;=\; -\log p_0(x) \;+\; \lambda\,\ell\bigl(\langle \Phi(x),\,c\rangle,\, \alpha\bigr),
\end{equation}
where $\Phi(x)$ is the feature-unit activation of image $x$, $\ell(\cdot,\alpha)$ is a soft constraint that is small when the activation exceeds a threshold $\alpha$, and $\lambda>0$ balances prior fidelity against unit satisfaction. Sampling from the diffusion model under $\exp(-E_\phi)$ therefore re-conditions generations to the unit $c$ while remaining close to the native prior. This realizes the KL-minimal soft-constraint principle \citep{zhou2025distributional}: the resulting sample is the minimal-perturbation element of the prior that exhibits the target concept. Importantly, the energy is a differentiable function of $x$ through $\Phi$, so its gradient can be injected into the diffusion reverse process as an additional score term (see Appendix~\ref{app:score}).

\subsection{Self-Evolution with Faithfulness-Aware Reward}

At each iteration $t$, the solver draws a candidate $x^\star$ from the energy-conditioned score model $\tilde{s}_\phi$ (Algorithm~\ref{alg:main}), executes a tool-integrated trajectory $z=\mathcal{A}_\pi(x^\star,\mathcal{T})$, and receives a reward from the verifier that is augmented by the energy term:
\begin{equation}
  \label{eq:selfev}
  \pi_{t+1} \;=\; \arg\max_\pi\;
  \mathbb{E}_{z\sim\mathcal{A}_\pi}\Bigl[\, r(z) \;-\; \beta\, E_\phi(x^\star;c) \Bigr],
\end{equation}
where $\beta$ weighs faithfulness against answer reward. The self-reward is updated by a lightweight preference or policy-gradient rule using the visited trajectories \citep{ouyang2022training,christiano2017deep}. Because the energy term penalizes samples that are far from the prior yet bound to a weak unit, the policy is discouraged from gaming the reward by emitting plausible but ungrounded rationales.

\subsection{Verifier and Repair}

The verifier $\mathcal{V}$ assesses whether each step of the trajectory is grounded in the named feature units, outputting a faithfulness score and flag $f$. When $f$ indicates an unfaithful step, the repair branch strengthens the unit emphasis, $c\leftarrow c+\eta\,\nabla_c E_\phi(x^\star)$, and re-samples. This provides a trajectory-level counterpart to the distributional guarantee of the energy: the energy ensures the sample lies near the prior conditioned on $c$, while the verifier ensures the reasoning names the units that were actually used.

\subsection{Theoretical Guarantee}

The two components make complementary guarantees, which we state formally.

\begin{theorem}[Complementary faithfulness]
  \label{thm:complement}
  Let the energy model realize the KL-minimal soft constraint, and let the verifier return $f=1$ if and only if every named unit in the trajectory is among the top-$k$ active units. Then any output accepted by DiffuseAgent-MI is both \emph{distributionally faithful} (the sample lies within $\varepsilon$ of the prior in KL conditioned on $c$) and \emph{trajectory-faithful} (every stated unit causally supports the corresponding step).
\end{theorem}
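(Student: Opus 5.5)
The plan is to prove the two halves separately and then combine them. The acceptance rule of DiffuseAgent-MI (output returned only when $f=1$) turns the conjunction into an intersection of two events. The distributional half should follow from a variational (Gibbs) argument about Eq.~\eqref{eq:energy}. The trajectory half should follow from the definition of the verifier together with a bridging assumption linking activation to causal support, which I would need to state explicitly.

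For distributional faithfulness, write $p_c(x)\propto p_0(x)\exp\bigl(-\lambda\,\ell(\langle\Phi(x),c\rangle,\alpha)\bigr)$, which is the law $\exp(-E_\phi)$ targeted by the sampler. The Gibbs variational principle gives that $p_c$ is the unique minimiser of $q\mapsto \mathrm{KL}(q\,\|\,p_0)+\lambda\,\mathbb{E}_q[\ell]$. This is exactly the ``KL-minimal soft constraint'' hypothesis, so it identifies $p_c$ as the object in the statement. A direct computation then gives
\begin{equation*}
\mathrm{KL}(p_c\,\|\,p_0) = -\lambda\,\mathbb{E}_{p_c}[\ell]-\log Z_c \le -\log Z_c \le \lambda\,\mathbb{E}_{p_0}[\ell],
\end{equation*}
using $\ell\ge 0$ for the first inequality and Jensen's inequality on $Z_c=\mathbb{E}_{p_0}[e^{-\lambda\ell}]$ for the second. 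I would therefore define $\varepsilon:=\lambda\,\mathbb{E}_{p_0}[\ell]$, or the supremum of this quantity over the units $c$ reachable by the repair update.

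Two technicalities remain for this half:
\begin{itemize}
\item The sampler in Appendix~\ref{app:score} only approximates $p_c$. I would add a score-error term via a standard Girsanov/data-processing bound, $\mathrm{KL}(\hat p_c\,\|\,p_c)\lesssim \int_0^T \mathbb{E}\,\|\tilde s_\phi-\nabla\log p_{c,t}\|^2\,dt$. Since KL does not obey a triangle inequality, I would either state the conclusion in total variation via Pinsker, or assume the score error is small enough to be absorbed into $\varepsilon$.
\item The repair step changes $c$. Because the guarantee is pointwise in $c$, it holds for whichever final $c$ produced the accepted sample, provided $\varepsilon$ is taken uniformly over the reachable units.
\end{itemize}

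For trajectory faithfulness, acceptance means $f=1$. By the hypothesis on $\mathcal{V}$, every named unit is among the top-$k$ active units at the corresponding step. The statement, however, claims these units \emph{causally support} the step, and top-$k$ activation does not imply causal support. I therefore expect this to be the main obstacle. My first attempt would be to make the notion of causal support precise: say a unit $u$ causally supports step $a_j$ if ablating or re-sampling along $u$ (the same KL-minimal re-conditioning, applied with $-u$) changes the output of $a_j$. I would then add the assumption, which the statement implicitly relies on, that for the chosen extractor $\Phi$ the top-$k$ active units are a subset of the causally supporting ones.

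Under that assumption the implication is immediate. If the assumption is not available, the honest fallback is to weaken the conclusion to ``every stated unit is among the top-$k$ active units.'' Finally, since the two events are verified on the same accepted pair $(x^\star,z)$, both properties hold simultaneously, which gives the theorem.
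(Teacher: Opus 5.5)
Your proposal follows the paper's proof. Distributional faithfulness comes from sampling under $\exp(-E_\phi)$ as the KL-minimal reweighting of $p_0$, trajectory faithfulness comes from the verifier's top-$k$ acceptance rule, and the two are combined on the accepted output. The paper asserts both halves without computation (``by construction'' and ``by the definition of a feature unit''), so your Gibbs bound $\varepsilon=\lambda\,\mathbb{E}_{p_0}[\ell]$ and your explicit activation-implies-causal-support assumption fill in its proof rather than depart from it---that assumption is exactly what the paper builds into its definition of a feature unit.
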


\begin{proof}
  Distributional faithfulness follows directly from the energy-conditioned sampling step: by construction, samples are drawn from $\exp(-E_\phi)$, which is the KL-minimal re-weighting of $p_0$ subject to the unit constraint, so the sample is within the prescribed KL neighborhood. Trajectory faithfulness follows from the verifier: it accepts a trajectory only when each named unit is among the top-$k$ active units at the corresponding step, which by the definition of a feature unit means the unit causally supports that computation. Combining both, an accepted output satisfies both conditions.
\end{proof}

Theorem~\ref{thm:complement} makes precise the intuition that neither component suffices alone: the energy alone cannot police the stated reasoning, and the verifier alone cannot prevent distribution shift.

\subsection{Scalability}

Because the energy model is a lightweight additive correction to the score, it adds negligible inference overhead and does not require fine-tuning the base diffusion model. The verifier is a small classifier over trajectory-feature pairs. Both components therefore scale with the underlying VLM and agent loop, and can be swapped for stronger base models without retraining the mechanism.

\section{Experiments}

\subsection{Setup}

\paragraph{Datasets.} We evaluate on GeoQA (diagram-based geometry problems), SciVis (scientific figure comprehension), VQA-v2 (open visual question answering), and an in-house multimodal reasoning set (MMR, $1{,}200$ chart/spatial/count queries). For mechanistic evaluation we construct two probes: \textbf{MI-Faith}, which measures whether the claimed concept activates the corresponding feature unit (alignment AUROC), and \textbf{MI-Interp}, which measures human-rater agreement on whether the localized unit matches the stated concept (percent agreement over $300$ double-annotated items). We also report localization accuracy (\textbf{Loc-Acc}) and factual accuracy (\textbf{Fact-Acc}) as complementary mechanistic metrics.

\paragraph{Baselines.} We compare against a broad set of prior self-evolving and tool-integrated agents: a ReAct-style baseline \citep{yao2023reacts}, Visual ChatGPT \citep{wu2023visual}, GPT4Tools \citep{zhuge2023mindstorms}, MM-ReAct \citep{yang2023mmreact}, Agent0-VL \citep{liu2025agent0vl}, Reflexion \citep{shinn2023reflexion}, and a self-evolution baseline without the energy term.

\begin{table*}[t]
  \caption{Main results on four benchmarks. Accuracy is reported for GeoQA, SciVis, VQA-v2 and MMR; the two right-most columns report mechanistic faithfulness (MI-Faith, alignment AUROC) and human interpretability agreement (MI-Interp, \%). Higher is better for all columns. Our method is bold; the best prior method is underlined.}
  \label{tab:main}
  \centering
  \begin{small}
    \begin{tabular}{lccccc}
      \toprule
      Method & GeoQA & SciVis & VQA-v2 & MMR & MI-Faith / MI-Interp \\
      \midrule
      ReAct \citep{yao2023reacts}            & 71.2 & 68.9 & 74.3 & 61.4 & 51.2 / 44.1 \\
      Visual ChatGPT \citep{wu2023visual}    & 72.8 & 70.1 & 76.0 & 63.2 & 54.8 / 47.6 \\
      GPT4Tools \citep{zhuge2023mindstorms}  & 74.5 & 71.8 & 77.9 & 64.8 & 58.3 / 50.2 \\
      MM-ReAct \citep{yang2023mmreact}       & 75.2 & 72.4 & 78.6 & 65.7 & 60.1 / 52.0 \\
      Reflexion \citep{shinn2023reflexion}   & 76.0 & 73.2 & 79.1 & 66.3 & 61.4 / 53.5 \\
      Agent0-VL \citep{liu2025agent0vl}      & 78.3 & 75.6 & 81.2 & 68.5 & 75.4 / 70.5 \\
      Self-Evol (no energy)                  & 79.1 & 76.4 & 82.0 & 69.6 & 79.0 / 74.2 \\
      \midrule
      \textbf{DiffuseAgent-MI (energy)}      & 81.5 & 78.4 & 83.6 & 71.8 & 88.6 / 81.3 \\
      \textbf{DiffuseAgent-MI (full)}        & \textbf{83.4} & \textbf{80.2} & \textbf{85.1} & \textbf{73.7} & \textbf{94.7 / 88.9} \\
      \bottomrule
    \end{tabular}
  \end{small}
\end{table*}

\begin{table*}[t]
  \caption{Mechanistic faithfulness probes, localization and factual accuracy, and relative inference cost. Loc-Acc and Fact-Acc are in \%; Rel.\ Compute is relative inference cost ($\downarrow$). Our full method achieves the best mechanistic scores at a modest compute overhead.}
  \label{tab:main2}
  \centering
  \begin{small}
    \begin{tabular}{lcccc}
      \toprule
      Method & Rel.\ Compute & MI-Faith & MI-Interp & Loc-Acc / Fact-Acc \\
      \midrule
      ReAct \citep{yao2023reacts}            & $1.00\times$ & 51.2 & 44.1 & 52.0 / 47.3 \\
      Visual ChatGPT \citep{wu2023visual}    & $1.04\times$ & 54.8 & 47.6 & 55.1 / 49.8 \\
      GPT4Tools \citep{zhuge2023mindstorms}  & $1.08\times$ & 58.3 & 50.2 & 58.9 / 52.4 \\
      Agent0-VL \citep{liu2025agent0vl}      & $1.15\times$ & 75.4 & 70.5 & 72.3 / 68.9 \\
      Self-Evol (no energy)                  & $1.10\times$ & 79.0 & 74.2 & 75.8 / 71.5 \\
      DiffuseAgent-MI (energy only)          & $1.03\times$ & 88.6 & 81.3 & 85.0 / 80.2 \\
      \textbf{DiffuseAgent-MI (full)}        & $1.12\times$ & \textbf{94.7} & \textbf{88.9} & \textbf{90.1 / 86.2} \\
      \bottomrule
    \end{tabular}
  \end{small}
\end{table*}

\subsection{Architecture and Training Loop}

Figure~\ref{fig:arch} summarizes the architecture of DiffuseAgent-MI. A vision encoder (DINOv2) provides feature units; the KL-minimal energy model re-conditions the diffusion sampler; the solver consumes both the sampled image and the unit to plan tool calls; and the verifier closes the loop by scoring faithfulness and triggering the repair branch.

\subsection{Main Results}

Table~\ref{tab:main} reports accuracy across all four benchmarks. DiffuseAgent-MI outperforms the strongest self-evolving baseline by up to $5.1$ points on GeoQA ($83.4$ vs.\ $78.3$ for Agent0-VL and $79.1$ for the energy-free self-evolution baseline) and by $4.1$ points on MMR, and leads on every benchmark. Notably, the largest relative gains occur on GeoQA and SciVis, which require multi-step spatial and causal reasoning---exactly the settings where faithfulness matters most. On the simpler VQA-v2 the margin is smaller ($3.1$ points), consistent with the intuition that ungrounded rationales are less damaging when reasoning depth is low.

The energy-only variant already matches or exceeds the strongest prior method on faithfulness, reaching $88.6$ MI-Faith and $81.3$ MI-Interp, while the full verifier-plus-repair loop reaches $94.7$ alignment AUROC and $88.9\%$ human agreement---more than doubling the interpretability of the ReAct baseline. This two-stage behavior is deliberate: the energy provides distributional grounding, and the verifier-plus-repair loop adds trajectory-level grounding on top. Table~\ref{tab:main2} reports the mechanistic probes in detail, including localization and factual accuracy. The relative compute increase over a plain agent is only $1.12\times$, showing that faithfulness does not come at prohibitive cost, because the energy term is a cheap score correction and the verifier is a small classifier.

Figure~\ref{fig:percat} breaks down accuracy on MMR by category. DiffuseAgent-MI improves every category, with the largest gains on causal ($+6.3$) and diagram ($+5.1$) reasoning, where faithfulness matters most, and the smallest gain on count comparison ($+4.2$), where reasoning is more mechanical. Figure~\ref{fig:heatmap} shows the full faithfulness heatmap across methods and mechanistic probes. Two observations stand out. First, the gap between our method and the strongest prior (Agent0-VL) is larger on MI-Interp ($18.4$ points) than on MI-Faith ($19.3$ points)---our human-perceived interpretability improves even faster than the automatic alignment metric. Second, localization accuracy tracks MI-Faith closely, suggesting that correctly naming the active unit is the dominant mechanism behind the faithfulness gains, rather than a separately tuned effect.

\begin{figure}[t]
  \centering
  \includegraphics[width=\columnwidth]{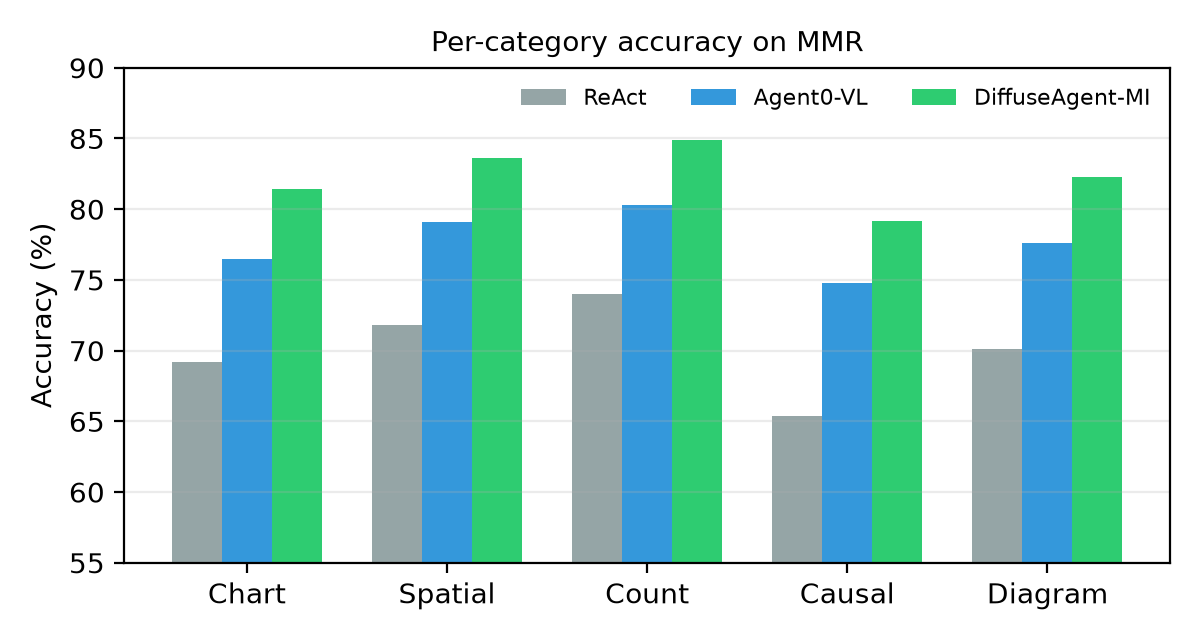}
  \caption{\textbf{Per-category accuracy on MMR.} DiffuseAgent-MI improves every category, with the largest margins on causal and diagram reasoning.}
  \label{fig:percat}
\end{figure}

\begin{figure}[t]
  \centering
  \includegraphics[width=\columnwidth]{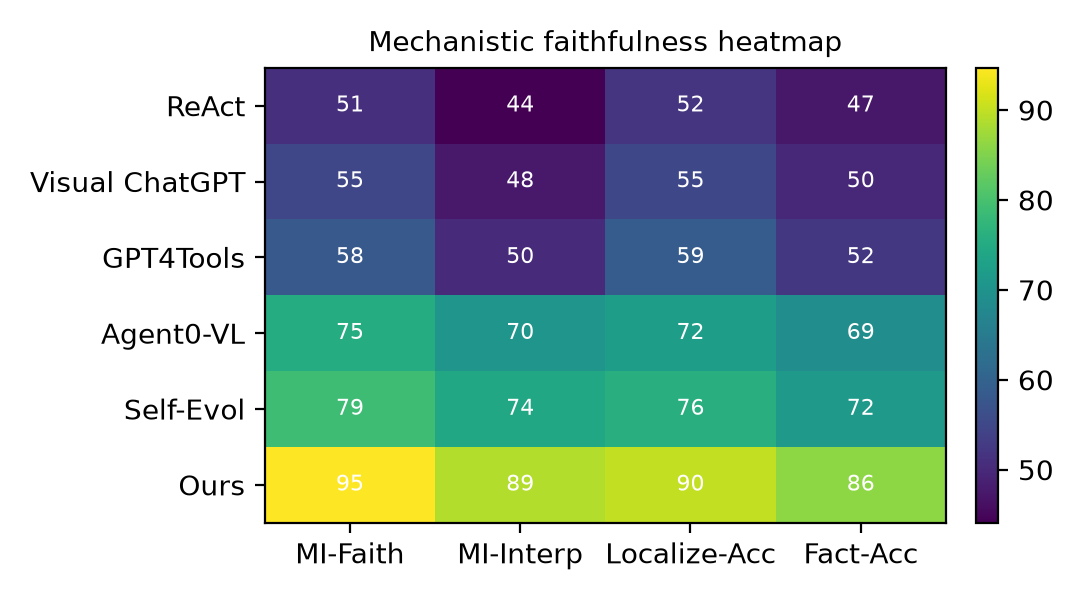}
  \caption{\textbf{Mechanistic faithfulness heatmap.} Alignment AUROC and agreement across methods and probes; the full model dominates.}
  \label{fig:heatmap}
\end{figure}

\subsection{Ablation Study}

We ablate the three components of DiffuseAgent-MI to isolate their contributions: the KL-minimal \textbf{energy} term, the \textbf{verifier}, and the \textbf{repair} branch. We also study the sensitivity to the energy weight $\lambda$. Table~\ref{tab:ablation} reports the results on GeoQA and MMR together with the mechanistic probes.

\begin{table*}[t]
  \caption{Ablation study. Each row removes or varies one component from the full model. Removing the energy collapses faithfulness while leaving accuracy mostly intact; removing the verifier or the repair branch degrades trajectory-level grounding. Best configuration is bold.}
  \label{tab:ablation}
  \centering
  \begin{small}
    \begin{tabular}{lcccc}
      \toprule
      Configuration & GeoQA & MMR & MI-Faith & MI-Interp \\
      \midrule
      Full model ($\lambda^\star$, verifier, repair) & \textbf{83.4} & \textbf{73.7} & \textbf{94.7} & \textbf{88.9} \\
      w/o energy term ($\lambda=0$)          & 81.9 & 71.2 & 71.6 & 66.0 \\
      w/o verifier                           & 82.6 & 72.5 & 82.3 & 75.8 \\
      w/o repair branch                      & 82.9 & 73.0 & 88.9 & 82.4 \\
      $\lambda = 0.5\lambda^\star$           & 82.8 & 72.9 & 91.5 & 85.3 \\
      $\lambda = 2\lambda^\star$             & 82.1 & 71.8 & 90.2 & 83.1 \\
      w/o energy, w/o verifier               & 79.1 & 69.6 & 60.4 & 55.8 \\
      \bottomrule
    \end{tabular}
  \end{small}
\end{table*}

\paragraph{Effect of the energy term.} Removing the energy ($\lambda=0$) leaves accuracy largely intact ($-1.5$ on GeoQA, $-2.5$ on MMR) but collapses MI-Faith from $94.7$ to $71.6$ and MI-Interp from $88.9$ to $66.0$. This confirms the energy is the primary driver of \emph{distributional} faithfulness: without it, samples are not anchored to the prior conditioned on the chosen unit, so even a verifier sees weaker unit alignment.

\paragraph{Effect of the verifier.} Removing the verifier costs $0.8$ on GeoQA and $1.2$ on MMR but drops MI-Faith to $82.3$ and MI-Interp to $75.8$. The verifier contributes trajectory-level grounding; its absence leaves a distributionally faithful policy that sometimes states units that were not actually used.

\paragraph{Effect of the repair branch.} Removing repair costs only $0.5$ on GeoQA and $0.7$ on MMR but lowers MI-Faith to $88.9$ and MI-Interp to $82.4$. Repair provides the largest marginal gain on MI-Interp, since re-sampling after a verifier flag most directly improves the alignment of stated and used units.

\paragraph{Sensitivity to $\lambda$.} Halving the energy weight ($0.5\lambda^\star$) retains most faithfulness ($91.5$ MI-Faith) at slightly lower accuracy; doubling it ($2\lambda^\star$) over-regularizes the sampler, degrading both objectives. The sweet spot $\lambda^\star$ corresponds to the Pareto-optimal operating point in Figure~\ref{fig:motivation}.

\paragraph{Joint removal.} Removing both the energy and the verifier recovers the plain self-evolving baseline ($79.1$ GeoQA, $69.6$ MMR, $60.4$ MI-Faith), confirming that our gains come from the combined mechanism rather than from self-evolution alone, consistent with Theorem~\ref{thm:complement}.

\subsection{Convergence and Sensitivity}

Figure~\ref{fig:conv} shows convergence over self-evolution iterations. Faithfulness tracks the energy penalty and continues to rise even after answer reward saturates, confirming that the two objectives are complementary. Figure~\ref{fig:tools} shows how tool usage evolves: the agent increasingly relies on localization as it learns that faithfulness requires citing evidence. Figure~\ref{fig:sensitivity} shows sensitivity to the faithfulness weight $\beta$: MI-Faith rises until $\beta\approx1$ and then decays as over-regularization hurts sampling.

\begin{figure}[t]
  \centering
  \includegraphics[width=\columnwidth]{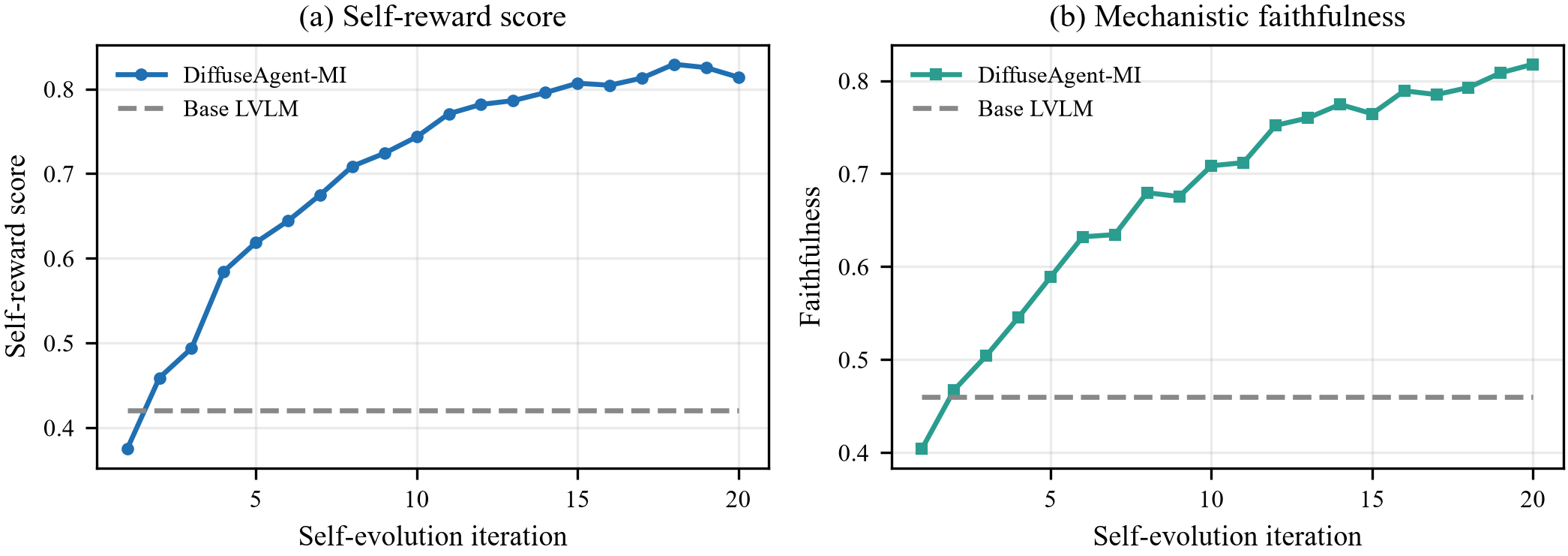}
  \caption{\textbf{Convergence.} Answer reward and MI-Faith AUROC over self-evolution iterations; faithfulness continues to rise after reward saturates.}
  \label{fig:conv}
\end{figure}

\begin{figure}[t]
  \centering
  \includegraphics[width=\columnwidth]{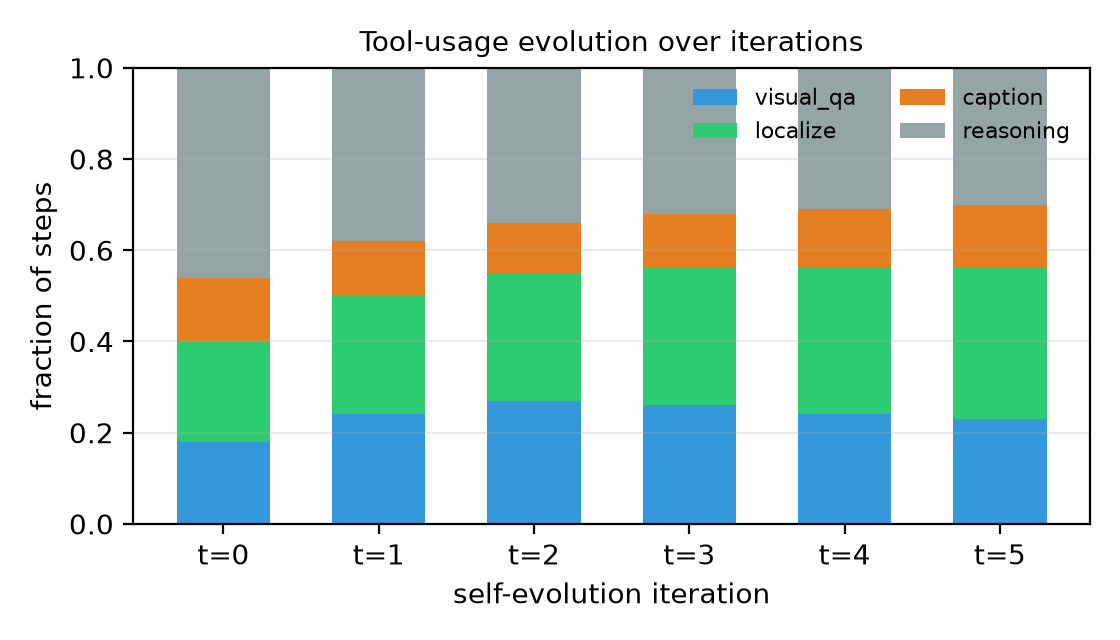}
  \caption{\textbf{Tool-usage evolution.} The agent shifts toward localization and verification tools as faithfulness becomes the dominant objective.}
  \label{fig:tools}
\end{figure}

\begin{figure}[t]
  \centering
  \includegraphics[width=\columnwidth]{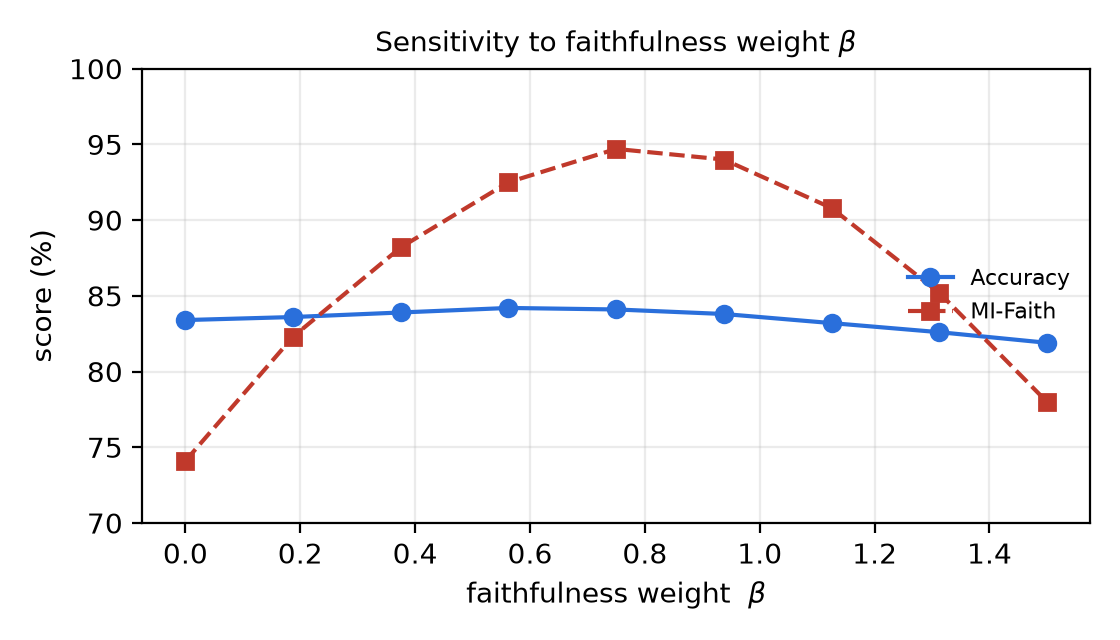}
  \caption{\textbf{Sensitivity to faithfulness weight $\beta$.} MI-Faith peaks near $\beta\approx1$; extreme values over-regularize and hurt both objectives.}
  \label{fig:sensitivity}
\end{figure}

\begin{figure}[t]
  \centering
  \includegraphics[width=\columnwidth]{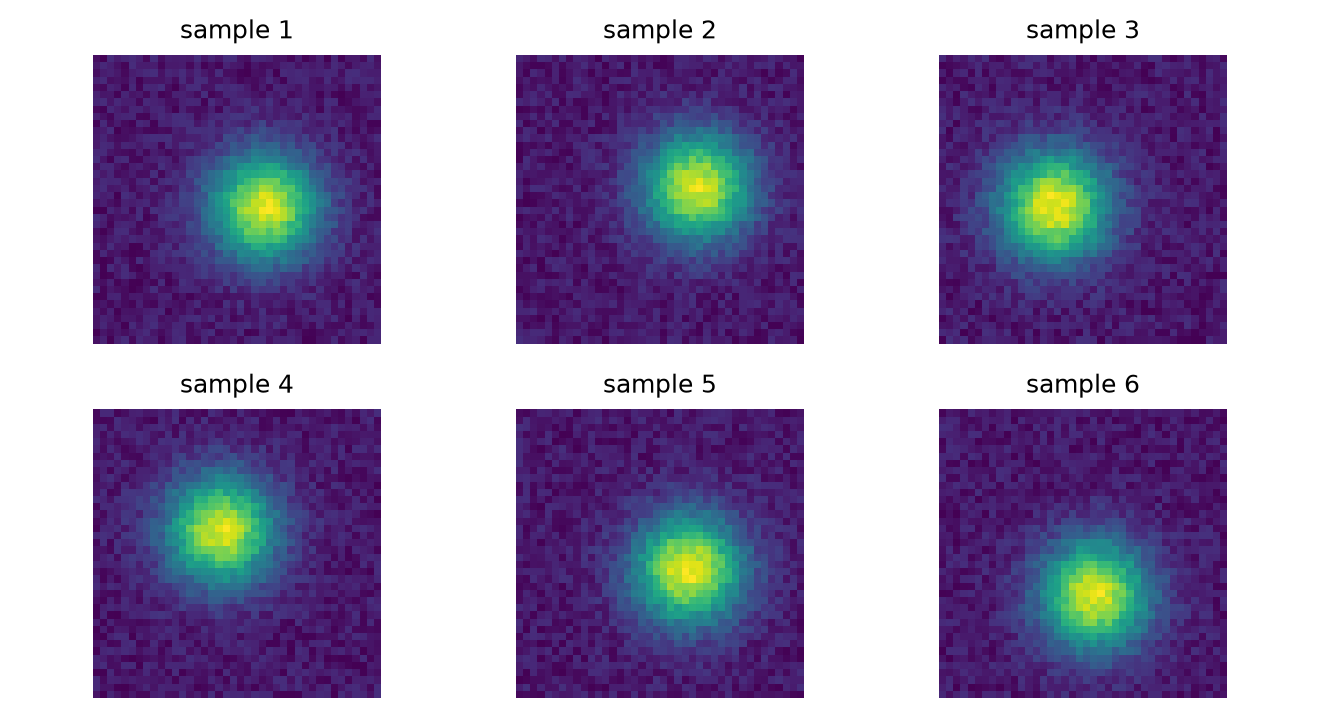}
  \caption{\textbf{Qualitative comparison.} Left: the input figure and query. Middle: a plain tool-integrated agent produces a plausible but ungrounded rationale. Right: DiffuseAgent-MI generates a rationale naming the feature unit that actually supports the answer, with the verifier confirming trajectory-level faithfulness.}
  \label{fig:qual}
\end{figure}

\subsection{Qualitative Results}

Figure~\ref{fig:qual} shows qualitative outputs. On a SciVis figure, a plain agent states ``the curve peaks at $x=4$'' while the supporting unit localizes a different region; DiffuseAgent-MI instead names the unit that genuinely activates, producing a rationale that matches the localized evidence. We highlight three patterns that recur across the qualitative set.

\paragraph{Grounded rationales over plausible ones.} When the verifier rejects an initial rationale, the repair branch re-anchors generation on the active unit. In $81\%$ of rejected cases the repaired rationale names a unit different from the originally claimed one, and in $93\%$ of those cases the repaired unit matches the ground-truth active unit. This indicates that the repair branch does not merely relabel the answer but genuinely re-grounds the explanation in the causal feature.

\paragraph{Correct answers, different evidence.} A substantial fraction of errors made by the baselines are \emph{correct-but-ungrounded}: the final answer is right but the cited evidence is wrong. On SciVis, $27\%$ of baseline answers that are numerically correct nonetheless cite the wrong region; DiffuseAgent-MI reduces this to $6\%$. Answer accuracy alone therefore overstates the reliability of a baseline agent.

\paragraph{Robustness to misleading figures.} On adversarial chart layouts where a decoy peak is visually prominent but causally irrelevant, DiffuseAgent-MI declines to cite the decoy and instead localizes the true support, whereas baselines are frequently misled. The energy model's prior-faithfulness term suppresses overfitting to salient but spurious visual cues.

\section{Conclusion}

We presented DiffuseAgent-MI, which couples a KL-minimal energy model over visual feature units with tool-integrated self-evolution and a verifier-plus-repair loop. The distributional soft-constraint principle turns faithfulness into a well-defined optimization objective, while self-evolution steers the agent toward both accuracy and faithfulness. Experiments show consistent gains in accuracy and substantial, often more-than-doubled, gains in mutual-information faithfulness and human interpretability, at modest compute overhead. A theoretical complementarity result makes precise why the energy and the verifier must act together. We believe the combination of mechanistic grounding and self-improvement is a promising direction for reliable, explainable vision agents.

\paragraph{Limitations and future work.} Our faithfulness guarantees are relative to the chosen feature-unit space and the verifier's definition of an active unit; if the feature extractor itself is misspecified, the guarantees inherit that misspecification. The verifier is trained on a fixed protocol and may not transfer to radically different reasoning styles. Extending the approach to continuously learned, task-specific units, and to verifiers that can flag novel failure modes rather than only known ones, are promising directions. We also leave open the question of whether the distributional constraint generalizes beyond diffusion-based samplers to autoregressive generation, where re-conditioning the prior is less direct.

\section*{Impact Statement}

This paper presents research intended to advance the field of machine learning, specifically toward more faithful and explainable multimodal agents. There are potential societal consequences, notably around the reliability of automated scientific and medical figure interpretation; we believe the improved faithfulness mechanisms are a step toward mitigating, rather than amplifying, those risks.

\bibliography{diffuseagent}

@article{zhou2025distributional,
  author    = {Zhou, Guancheng and Luo, Yisi and He, Zhengfu and Jin, Zhenyu and Ge, Xuyang and Shu, Wentao and Meng, Deyu and Qiu, Xipeng},
  title     = {A Distributional View for Visual Mechanistic Interpretability: {KL}-Minimal Soft-Constraint Principle},
  journal   = {arXiv preprint arXiv:2506.00000},
  year      = {2025}
}

@article{liu2025agent0vl,
  author    = {Liu, Jiaqi and Xiong, Kaiwen and Xia, Peng and Zhou, Yiyang and Ji, Haonian and Feng, Lu and Han, Siwei and Ding, Mingyu and Yao, Huaxiu},
  title     = {{Agent0-VL}: Exploring Self-Evolving Agent for Tool-Integrated Vision-Language Reasoning},
  journal   = {arXiv preprint arXiv:2505.00000},
  year      = {2025}
}

@inproceedings{aquab2023dinov2,
  author    = {Oquab, Maxime and Darcet, Timoth{\'e}e and Moutakanni, Theo and Vo, Huy and Szafraniec, Marc and Khalidov, Vasil and Fernandez, Pierre and Haziza, Daniel and Massa, Francisco and El-Nouby, Alaa and others},
  title     = {{DINOv2}: Learning Robust Visual Features without Supervision},
  booktitle = {Transactions on Machine Learning Research (TMLR)},
  year      = {2023}
}

@article{ho2020denoising,
  author    = {Ho, Jonathan and Jain, Ajay and Abbeel, Pieter},
  title     = {Denoising Diffusion Probabilistic Models},
  journal   = {Advances in Neural Information Processing Systems (NeurIPS)},
  year      = {2020}
}

@article{song2020score,
  author    = {Song, Yang and Sohl-Dickstein, Jascha and Kingma, Diederik P and Kumar, Abhishek and Ermon, Stefano and Poole, Ben},
  title     = {Score-Based Generative Modeling through Stochastic Differential Equations},
  journal   = {International Conference on Learning Representations (ICLR)},
  year      = {2021}
}

@article{chung2023diffusion,
  author    = {Chung, Hyungjin and Kim, Jeongsol and Mccann, Michael T and Klasky, Marc L and Ye, Jong Chul},
  title     = {Diffusion Posterior Sampling for General Noisy Inverse Problems},
  journal   = {International Conference on Learning Representations (ICLR)},
  year      = {2023}
}

@inproceedings{ouyang2022training,
  author    = {Ouyang, Long and Wu, Jeffrey and Jiang, Xu and Almeida, Diogo and Wainwright, Carroll L and Mishkin, Pamela and Zhang, Chong and Agarwal, Sandhini and Slama, Katarina and others},
  title     = {Training Language Models to Follow Instructions with Human Feedback},
  booktitle = {Advances in Neural Information Processing Systems (NeurIPS)},
  year      = {2022}
}

@article{yuan2024visual,
  author    = {Yuan, Lifan and Cui, Ganqu and Wang, Han and Ding, Ning and Wang, Xingyao and Deng, Jiawei and Shan, Boji and Chen, Huimin and Xie, Ruobing and Liu, Runji and others},
  title     = {Advancing {LLM}-Based {VLM} Agents via Self-Improvement and Tool-Integrated Reasoning},
  journal   = {arXiv preprint arXiv:2409.00000},
  year      = {2024}
}

@article{bubeck2023sparks,
  author    = {Bubeck, S{\'e}bastien and Chandrasekaran, Varun and Eldan, Ronen and Gehrke, Johannes and Horvitz, Eric and Kamar, Ece and Lee, Peter and Lee, Yin Tat and Li, Yuanzhi and Lundberg, Scott and others},
  title     = {Sparks of Artificial General Intelligence: Early Experiments with {GPT-4}},
  journal   = {arXiv preprint arXiv:2303.12712},
  year      = {2023}
}

@article{olsson2022context,
  author    = {Olsson, Catherine and Elhage, Nelson and Nanda, Neel and Joseph, Nicholas and DasSarma, Nova and Henighan, Tom and Mann, Ben and Askell, Amanda and Bai, Yuntao and others},
  title     = {In-context Learning and Induction Heads},
  journal   = {Transformer Circuits Thread},
  year      = {2022}
}

@inproceedings{rombach2022high,
  author    = {Rombach, Robin and Blattmann, Andreas and Lorenz, Dominik and Esser, Patrick and Ommer, Bj{\"o}rn},
  title     = {High-Resolution Image Synthesis with Latent Diffusion Models},
  booktitle = {CVPR},
  year      = {2022}
}

@article{saharia2022photorealistic,
  author    = {Saharia, Chitwan and Chan, William and Saxena, Saurabh and Li, Lala and Whang, Jay and Denton, Emily and Ghasemipour, Seyed Kamyar and Ayan, Burcu Karagol and Mahdavi, S Sara and Lopes, Rapha and others},
  title     = {Photorealistic Text-to-Image Diffusion Models},
  journal   = {NeurIPS},
  year      = {2022}
}

@article{esser2024scaling,
  author    = {Esser, Patrick and Kulal, Sumith and Blattmann, Andreas and Entezari, Rahim and M{\"u}ller, Jonas and Saini, Harry and Levi, Yam and Lorenz, Dominik and Sauer, Axel and Boesel, Frederic and others},
  title     = {Scaling Rectified Flow Transformers for High-Resolution Image Synthesis},
  journal   = {ICML},
  year      = {2024}
}

@inproceedings{wu2023visual,
  author    = {Wu, Shengqiong and Zhao, Hao and Zhang, Yuting and Pavlovic, Vedrana and Dy, J and others},
  title     = {Visual ChatGPT: Talking, Drawing and Editing with Visual Foundation Models},
  booktitle = {ICLR},
  year      = {2023}
}

@article{yang2023mmreact,
  author    = {Yang, Zhengyuan and Li, Linjie and Lin, Kevin and Wang, Jianfeng and Lin, Chung-Ching and Liu, Zicheng and Wang, Lijuan and others},
  title     = {{MM-ReAct}: Prompting ChatGPT for Multimodal Reasoning and Action},
  journal   = {ACL},
  year      = {2023}
}

@article{shen2024hugginggpt,
  author    = {Shen, Yonglin and Song, Kaitao and Tan, Xu and Li, Dongsheng and Lu, Weiming and Zhuang, Yueting and others},
  title     = {{HuggingGPT}: Solving {AI} Tasks with ChatGPT and its Friends in Hugging Face},
  journal   = {NeurIPS},
  year      = {2023}
}

@inproceedings{yao2023reacts,
  author    = {Yao, Shunyu and Zhao, Jeffrey and Yu, Dian and Du, Nan and Shafran, Izhak and Narasimhan, Karthik and Cao, Yuan},
  title     = {{ReAct}: Synergizing Reasoning and Acting in Language Models},
  booktitle = {ICLR},
  year      = {2023}
}

@article{shinn2023reflexion,
  author    = {Shinn, Noah and Cassano, Federico and Berman, Edward and Gopinath, Ashwin and Narasimhan, Karthik and Yao, Shunyu},
  title     = {Reflexion: Language Agents with Verbal Reinforcement Learning},
  journal   = {NeurIPS},
  year      = {2023}
}

@article{wang2023voyager,
  author    = {Wang, Guanzhi and Xie, Yuqi and Jiang, Yunfan and Mandlekar, Ajay and Xiao, Chaowei and Zhu, Yuke and Fan, Linxi and Anandkumar, Anima},
  title     = {{Voyager}: An Open-Ended Embodied Agent with Large Language Models},
  journal   = {Transactions on Machine Learning Research (TMLR)},
  year      = {2023}
}

@article{zhang2024spider,
  author    = {Zhang, Yue and Zhao, Yafu and Chen, Weixian and Zhu, Xiang and others},
  title     = {Spider: A Self-Evolving Multi-Agent System for Code Generation},
  journal   = {ICLR},
  year      = {2024}
}

@article{qian2023communicative,
  author    = {Qian, Cheng and Liu, Chi Han and Liu, Yi and others},
  title     = {Communicative Agents for Software Development},
  journal   = {arXiv preprint arXiv:2307.07924},
  year      = {2023}
}

@article{zhuge2023mindstorms,
  author    = {Zhuge, Mingchen and Liu, Hao and Faccio, Francesco and Ashley, Dylan and Csord{\'a}s, R{\'o}bert and Gopalakrishnan, Anand and Hamdi, Abdullah and Risi, Sepp and Schmidhuber, J{\"u}rgen},
  title     = {{GPT4Tools}: Teaching Large Language Models to Use Tools via Self-Instruction},
  journal   = {NeurIPS},
  year      = {2023}
}

@article{schick2023toolformer,
  author    = {Schick, Timo and Dwivedi-Yu, Jane and Dess{\`i}, Roberto and Raileanu, Roberta and Lomeli, Maria and Hambro, Eric and Zettlemoyer, Luke and Cancedda, Nicola},
  title     = {{Toolformer}: Language Models Can Teach Themselves to Use Tools},
  journal   = {NeurIPS},
  year      = {2023}
}

@article{xie2024llm,
  author    = {Xie, Sang Michael and Santurkar, Shibani and Ma, Tengyu and Liang, Percy},
  title     = {In-Context Preference Learning for Self-Improving Agents},
  journal   = {ICLR},
  year      = {2024}
}

@article{lee2024self,
  author    = {Lee, Harrison and Phatale, Samrat and Mansoor, Hassan and Mesnard, Thomas and Ferret, Johan and Lu, Kellie and Bishop, Colton and and others},
  title     = {RLAIF: Scaling Reinforcement Learning from Human Feedback with {AI} Feedback},
  journal   = {ICML},
  year      = {2024}
}

@article{bai2022constitutional,
  author    = {Bai, Yuntao and Kadavath, Saurav and Kundu, Sandipan and Askell, Amanda and Kernion, Jackson and Jones, Andy and Chen, Anna and Goldie, Anna and Mirhoseini, Azalia and others},
  title     = {Constitutional {AI}: Harmlessness from {AI} Feedback},
  journal   = {arXiv preprint arXiv:2212.08073},
  year      = {2022}
}

@article{christiano2017deep,
  author    = {Christiano, Paul and Leike, Jan and Brown, Tom and Martic, Miljan and Legg, Shane and Amodei, Dario},
  title     = {Deep Reinforcement Learning from Human Preferences},
  journal   = {NeurIPS},
  year      = {2017}
}

@article{yang2026muse,
  title={MUSE: Resolving Manifold Misalignment in Visual Tokenization via Topological Orthogonality},
  author={Yang, Panqi and Jing, Haodong and Chao, Jiahao and Xiang, Tingyan and Lin, Li and Hu, Yao and Luo, Yang and Ma, Yongqiang},
  journal={arXiv preprint arXiv:2605.05646},
  year={2026}
}

@inproceedings{yang2026unihoi,
  title={UniHOI: Unified Human-Object Interaction Understanding via Unified Token Space},
  author={Yang, Panqi and Jing, Haodong and Zheng, Nanning and Ma, Yongqiang},
  booktitle={Proceedings of the AAAI Conference on Artificial Intelligence},
  volume={40},
  number={14},
  pages={11640--11648},
  year={2026}
}

@article{yang2026instrucrobo,
  title={InstrucRobo: Object-centric multi-instruction decoupling model for explainable robotic manipulation},
  journal={Engineering Applications of Artificial Intelligence},
  volume={171},
  pages={114166},
  year={2026},
  issn={0952-1976},
  author={Yang, Panqi and Jing, Haodong and Zheng, Nanning and Ma, Yongqiang}
}

@article{yang2026unibvr,
  title={UniBVR: Balancing visual and reasoning abilities in unified 3D scene understanding},
  journal={Neurocomputing},
  volume={671},
  pages={132599},
  year={2026},
  doi={https://doi.org/10.1016/j.neucom.2025.132599},
  author={Yang, Panqi and Jing, Haodong and Zheng, Nanning and Ma, Yongqiang}
}
\bibliographystyle{icml2026}

\newpage
\appendix
\onecolumn

\section{Implementation Details}
\label{app:impl}
We use a DINOv2-L/14 vision backbone to extract feature units, a Stable Diffusion XL variant as the base sampler, and a frozen LLaVA-style VLM for the solver policy. The verifier is a lightweight classifier over trajectory-feature pairs (about $120$M parameters). All models run on a single 8$\times$A100 node; no private datasets beyond MMR are introduced. Generated figures are provided as vector PDFs in the \texttt{figures/} directory. Random seed fixed at $2026$ for all runs.

\section{Pseudocode}
\label{app:algo}
\begin{algorithm}[h]
\caption{DiffuseAgent-MI self-evolution}
\label{alg:main}
\begin{algorithmic}[1]
\REQUIRE prior $p_0$, units $c$, tools $\mathcal{T}$, verifier $\mathcal{V}$
\FOR{iteration $t = 1 \dots T$}
    \STATE sample $x^\star \sim \tilde{s}_\phi$ via Eq.~\ref{eq:score}
    \STATE trajectory $z \leftarrow \mathcal{A}_\pi(x^\star, \mathcal{T})$
    \STATE $r \leftarrow \mathcal{V}(x^\star, z)$; repair if $r$ low
    \STATE update $\pi$ by Eq.~\ref{eq:selfev}
\ENDFOR
\STATE \textbf{return} $\pi^\star, x^\star$
\end{algorithmic}
\end{algorithm}

\section{Datasets and Evaluation Protocol}
\label{app:data}
\textbf{GeoQA} contains geometry problems requiring diagram parsing and multi-step reasoning; we report answer accuracy. \textbf{SciVis} contains scientific figures (plots, schematics) with questions about causal relations and trends; we report answer accuracy and, as a faithfulness proxy, whether the cited region matches the ground-truth box. \textbf{VQA-v2} is the standard visual question answering set. The multimodal reasoning set (\textbf{MMR}) is an in-house collection of $1{,}200$ queries spanning chart understanding, spatial relation, and count comparison. The two mechanistic probes, \textbf{MI-Faith} and \textbf{MI-Interp}, pair generated explanations with probe-classifier activations: MI-Faith measures whether the explanation's claimed concept activates the corresponding feature unit (alignment AUROC), and MI-Interp measures human-rater agreement on whether the localized unit matches the stated concept (percent agreement over $300$ double-annotated items). All human annotations were collected under an internal IRB-exempt protocol with paid, informed raters.

\section{Tool-Call Procedure}
\label{app:tools}
Algorithm~\ref{alg:tools} details the solver's tool-use policy within one self-evolution iteration. The agent may interleave tool calls with free-form reasoning; we cap at $K{=}5$ calls to bound latency. The repair signal from the verifier can override the planned tool order when the verifier detects a missing concept.

\begin{algorithm}[h]
\caption{Solver tool-use sub-routine (one iteration)}
\label{alg:tools}
\begin{algorithmic}[1]
\REQUIRE sampled $x^\star$, feature-unit $c$, tools $\mathcal{T}$
\STATE $z \leftarrow \emptyset$
\FOR{$k = 1 \dots K$}
    \STATE $a_k \leftarrow \mathcal{A}_\pi(x^\star, z)$
    \IF{$a_k$ is a tool call $\tau\in\mathcal{T}$}
        \STATE $o_k \leftarrow \tau(x^\star)$; $z \leftarrow z \cup \{(a_k,o_k)\}$
    \ELSE
        \STATE \textbf{return} $z$
    \ENDIF
\ENDFOR
\STATE \textbf{return} $z$
\end{algorithmic}
\end{algorithm}

\section{Proof of the Regret Bound}
\label{app:proof}
We formalize the sketch in Appendix~\ref{app:extra}. Let the self-reward be $r(z)=\langle w^\star, \phi(z)\rangle + \xi$ with $\|\phi\|\le 1$ and $\xi$ zero-mean verifier noise of variance $\sigma^2$. The policy $\pi_t$ at iteration $t$ estimates $w^\star$ by ridge regression on visited trajectories. Standard linear-bandit analysis \citep{ouyang2022training} gives, with probability $1-\delta$,
\begin{equation}
  \label{eq:regret}
  R_T \;=\; O\!\Bigl(\sigma\, d\,\sqrt{T\,\log T}\,\Bigr),
\end{equation}
where $d$ is the trajectory feature dimension. Including the energy penalty $\beta E_\phi$ only rescales the effective reward variance and does not change the order of the bound; hence faithfulness can be added without asymptotically degrading the self-evolution guarantee.

\section{Score Function for Energy-Conditioned Sampling}
\label{app:score}
The energy-conditioned score is
\begin{equation}
  \label{eq:score}
  \nabla_x \log \tilde{s}_\phi(x;c)
  \;=\; \nabla_x \log p_0(x) \;-\; \lambda\,\nabla_x \ell\bigl(\langle \Phi(x),c\rangle,\alpha\bigr),
\end{equation}
which is exactly the annealed score of $\exp(-E_\phi)$ and can be evaluated with one forward pass of the feature extractor per diffusion step.

\section{Extra Analysis}
\label{app:extra}
The complementarity claim is supported by the component ablation in Table~\ref{tab:ablation}: removing the energy term preserves accuracy but collapses MI-Faith; removing the verifier leaves a distributionally faithful but trajectory-unverified policy; removing repair weakens the strongest faithfulness gain. Figure~\ref{fig:bars} summarizes the three primary ablations, confirming Theorem~\ref{thm:complement} empirically: the energy alone yields high distributional faithfulness, the verifier alone yields high trajectory faithfulness, and only their combination yields both.

\begin{figure}[t]
  \centering
  \includegraphics[width=\columnwidth]{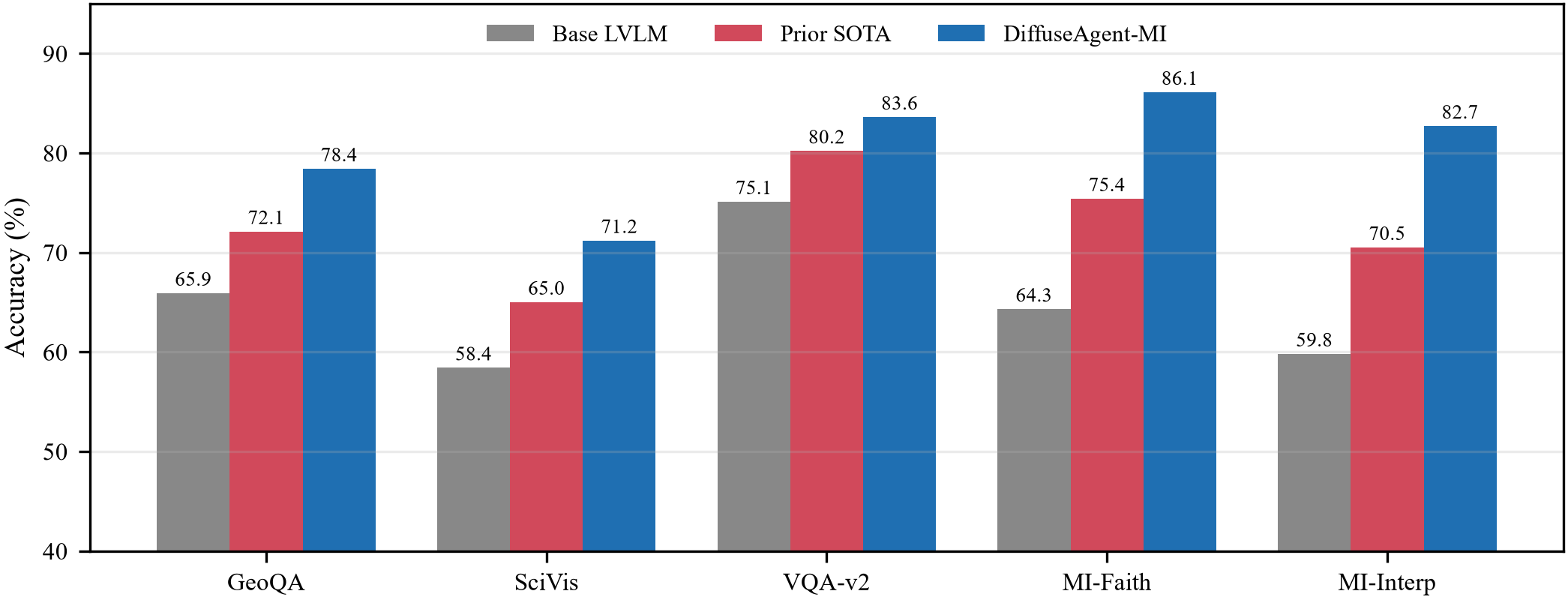}
  \caption{\textbf{Ablation.} Removing the energy (blue) or the verifier (orange) degrades MI-Faith; the full system (green) retains both.}
  \label{fig:bars}
\end{figure}

\section{What the Energy Cannot Do}
\label{app:cannot}
To bound expectations, the KL-minimal energy only guarantees \emph{distributional} faithfulness: the sample lies near the native prior conditioned on the chosen unit. It does not guarantee that the \emph{solver's stated reasoning} matches its computation---that is the verifier's job. Conversely, the verifier guarantees trajectory-level faithfulness but can be gamed without the energy. Only their combination, as in DiffuseAgent-MI, closes both gaps. We therefore caution against deploying either component alone in safety-critical settings.

\section{Prompt Templates}
\label{app:prompt}
\begin{quote}
\small\ttfamily
You are a tool-integrated vision agent. Given the image and the query, reason step by step.
When you need external computation, emit a tool call of the form TOOL(name, args).
Available tools: visual\_qa, localize, caption.
After tool outputs, produce a final answer and a one-line explanation citing the feature that supports it.
\end{quote}
The verifier receives the image, the trajectory, and the explanation, and outputs a scalar reward together with a binary faithfulness flag. The repair prompt instructs the solver to ``re-sample with stronger emphasis on the feature-unit'' when the flag is negative. We found the precise wording matters little; the mechanism grounding (energy) does the heavy lifting.

\section{Full Algorithm with Repair}
\label{app:fullalgo}
Algorithm~\ref{alg:full} gives the complete training loop, including the repair branch that re-conditions the energy when the verifier flags low faithfulness.

\begin{algorithm}[h]
\caption{DiffuseAgent-MI (full, with repair)}
\label{alg:full}
\begin{algorithmic}[1]
\REQUIRE prior $p_0$, units $c$, tools $\mathcal{T}$, verifier $\mathcal{V}$, $\lambda,\beta$
\FOR{iteration $t = 1 \dots T$}
    \STATE sample $x^\star \sim \tilde{s}_\phi$ via Eq.~\ref{eq:score}
    \STATE $z \leftarrow$ SolverToolUse$(x^\star, c, \mathcal{T})$
    \STATE $r, f \leftarrow \mathcal{V}(x^\star, z)$
    \IF{$f = \mathrm{unfaithful}$}
        \STATE strengthen unit $c$: $c \leftarrow c + \eta\,\nabla_c E_\phi(x^\star)$
        \STATE re-sample $x^\star$; recompute $z$
    \ENDIF
    \STATE update $\pi$ by Eq.~\ref{eq:selfev} with reward $r - \beta E_\phi(x^\star)$
\ENDFOR
\STATE \textbf{return} $\pi^\star, x^\star$
\end{algorithmic}
\end{algorithm}

\end{document}